\def\eqref#1{equation~\ref{#1}}
\def\1{\bm{1}}
\DeclareMathAlphabet{\mathsfit}{\encodingdefault}{\sfdefault}{m}{sl}
\SetMathAlphabet{\mathsfit}{bold}{\encodingdefault}{\sfdefault}{bx}{n}
\newcommand{\R}{\mathbb{R}}
\newcommand*{\defeq}{\stackrel{\mathsf{def}}{=}}
\def\defeq{ \stackrel{\triangle}{=}}
\def\cG{ {\mathcal G}}
\def\cL{ {\mathcal L}}
\def\cM{ {\mathcal M}}
\def\cN{ {\mathcal N}}
\def\cS{ {\mathcal S}}
\begin{document}
% \pagenumbering{arabic}

% \thispagestyle{fancy}
\twocolumn[
%\icmltitle{Deep Manifold Transformation\\Using Cross-Layer Geometry-Preserving Constraints}
\icmltitle{Deep Manifold Transformation for Dimension Reduction and Visualization}
%\icmltitle{Deep Manifold Transformation for Single-Cell Dimension Reduction and Cell Type Hierarchy Delineation}
%\icmltitle{Deep Manifold Transformation for High Dimensional Data Exploration and and Knowledge Discovery}

% It is OKAY to include author information, even for blind
% submissions: the style file will automatically remove it for you
% unless you've provided the [accepted] option to the icml2021
% package.

% List of affiliations: The first argument should be a (short)
% identifier you will use later to specify author affiliations
% Academic affiliations should list Department, University, City, Region, Country
% Industry affiliations should list Company, City, Region, Country

% You can specify symbols, otherwise they are numbered in order.
% Ideally, you should not use this facility. Affiliations will be numbered
% in order of appearance and this is the preferred way.
\icmlsetsymbol{equal}{*}

\begin{icmlauthorlist}
\icmlauthor{Stan Z. Li}{wu}
\icmlauthor{Zelin Zang}{wu}
\icmlauthor{Lirong Wu}{wu}\\
%\icmlauthor{Yajuan Wang}{wu}\\
\icmlauthor{AI Lab, School of Engineering, Westlake University, China}{wu}
\end{icmlauthorlist}

\icmlaffiliation{wu}{AI Lab, School of Engineering, Westlake University, China}

\icmlcorrespondingauthor{Stan Z. Li}{Stan.ZQ.Li@westlake.edu.cn}

\icmlkeywords{Machine Learning, ICML}
\vskip 0.3in
]

% this must go after the closing bracket ] following \twocolumn[ ...

% This command actually creates the footnote in the first column
% listing the affiliations and the copyright notice.
% The command takes one argument, which is text to display at the start of the footnote.
% The \icmlEqualContribution command is standard text for equal contribution.
% Remove it (just {}) if you do not need this facility.

%\printAffiliationsAndNotice{}  % leave blank if no need to mention equal contribution
%\printAffiliationsAndNotice{\icmlEqualContribution} % otherwise use the standard text.

\pagenumbering{arabic}

%Deep Manifold Transformation for Dimension Reduction and Visualization

\title{Deep Manifold Transformation With Cross-Layer Geometry-Preserving Constraints}

% \maketitle
\begin{abstract}
Manifold learning-based encoders have been playing important roles in nonlinear dimensionality reduction (NLDR) for data analysis. However, existing methods can often fail to preserve geometric or distributional structures of data. 
In this paper, we propose a deep manifold learning framework, called {\em deep manifold transformation} (DMT) for  unsupervised NLDR and embedding learning. DMT enhances deep neural networks by using cross-layer {\em local geometry-preserving} (LGP) constraints. The LGP constraints constitute the loss for deep manifold learning and serve as geometric regularizers for NLDR network training. Extensive experiments on synthetic and real-world data demonstrate that DMT networks outperform existing leading manifold-based NLDR methods in terms of local and global geometry preservation of data distributions.
\end{abstract}

% % Note that keywords are not normally used for peerreview papers.
% \begin{IEEEkeywords}
% Deep manifold learning, nonlinear dimensionality reduction (NLDR),  regularization.
% \end{IEEEkeywords}

%\tableofcontents

\section{Introduction}

Encoders have played an important role in {\em nonlinear dimensionality reduction} (NLDR) in analyzing complex high dimensional data. Take the classic deep autoencoder (AE) \cite{hinton2006reducing} for example. It uses an encoder, a multi-layer feed-forward network, to transform the input data to an embedding in a latent space of lower dimensionality; in the meantime, it uses the output of a decoder to compute the reconstruction error as the loss. 

One approach to NLDR is through manifold learning. Such methods are based on the manifold assumption \cite{Belkin-Niyogi-03,Fefferman-manifold-2016}, which states that patterns of interest in high dimensional data are lower-dimensional manifolds residing in the data space. Therefore, NLDR has been extensively studied in the context of {\em manifold learning} \cite{Belkin-Niyogi-03,Tenenbaum-science-00,Roweis-science-00,donoho2003hessian,Gashler-NIPS-2007, zhang2007mlle, CHEN-JASA-2009, McQueen-NIPS-2016,Saul-PNAS-2020}. Preserving the geometric structure of data is an important property to achieve. The geometric structure usually includes two aspects: structure of neighboring points on a manifold and relative locations among different manifolds. The former is related to local structure, whereas the latter global structure. 

Numerous manifold learning-based NLDR methods have been proposed. Isometric Mapping (ISOMAP) \cite{Tenenbaum-science-00} and locally linear embedding (LLE) \cite{Roweis-science-00} are classic ones among others. Later developments include Hessian LLE (HLLE) \cite{donoho2003hessian}, Modified LLE (MLLE) \cite{zhang2007mlle}, topologically constrained isometric embedding (TCIE) \cite{Bronstein-TCIE-2010}, and more recently, latent variable models (LVMs) \cite{Saul-PNAS-2020}. t-Distributed Stochastic Neighbor Embedding (t-SNE) \cite{Maaten-tSNE-2014} and Uniform Manifold Approximation and Projection (UMAP) \cite{mcinnes2018umap} are two popular ones for manifold learning-based NLDR, widely used for NLDR and visualization. Topological autoencoder (TAE) \cite{Moor-TAE-ICML2020}, as a deep learning method,  imposes topological constraints \cite{Wasserman-2018} on top of the autoencoder architecture to preserve the topological structure of data. Deep Isometric MAnifold Learning (DIMAL) \cite{pai2019dimal} combines deep learning framework with multi-dimensional scaling objective, which can be seen as a neural network version of MDS. With sparse geodesic sampling, DIMAL can learn a distance-preserving mapping to generate low-dimensional embeddings for a certain class of manifolds with only a few sampling points. A summary of well known NLDR and manifold learning algorithms can be found in \cite{Wiki-NLDR}. A deep learning approach to manifold-based NLDR has been suggested as ``manifold learning 2.0'' \cite{Bronstein-2020}; however, there have no effective approaches.

%However, the above manifold learning methods can fail on either simple synthetic data in 3-D or real data in high-D, none working for both. For example, ISOMAP, LLE, and alike do not work well with real-world data and t-SNE and UMAP cannot unfold simple manifolds like Swiss-roll as summarized in .

%\subsection{Problems to be Addressed}

The above leading manifold learning methods suffer from one or more of the following problems: 
(1) Although aimed to preserve geometrical or distributional structures, t-SNE and UMAP do not necessarily have fulfilled the promises, and MLLE and TAE have their limitations, too, as shown in Figure~\ref{fig:QualitativeComparison} for some examples.
(2) ISOMAP, LLE, t-SNE, and UMAP do not yield a transformation generalizable to unseen data.

%(3) Encoders that lack a geometry-preserving property can result in collapsed (many-to-one) transformations as a result of NLDR or crossing (twisted) mappings, causing unwanted consequences such as unreasonable distance metrics for classification.

\begin{figure}[!htb]
  \centering{\includegraphics[width=3.3in]{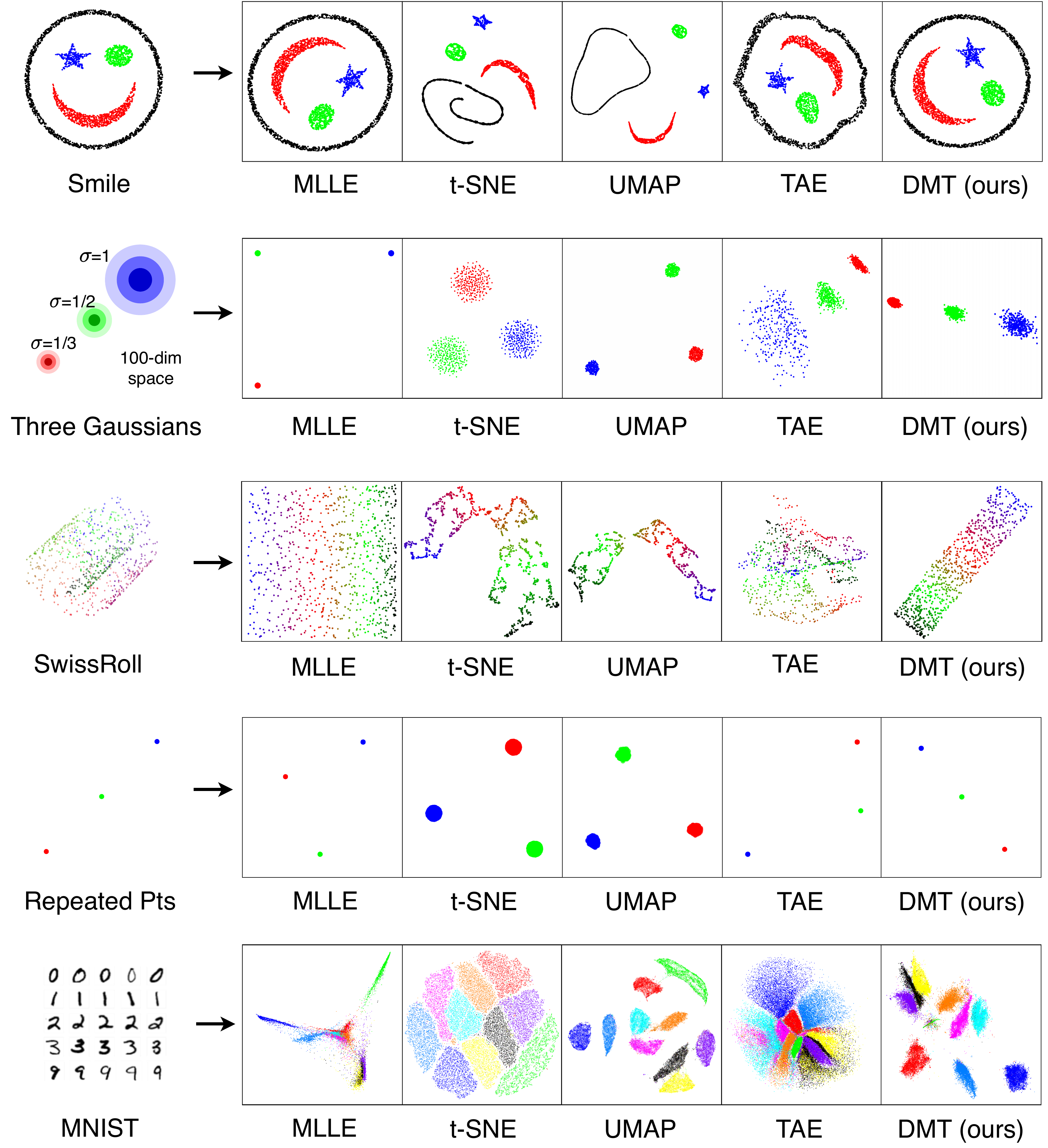}}
  \caption{Problems with existing methods. 
  (Row 1) t-SNE and UMAP fail to preserve the local- and global-geometrical structure of the 2-D smiling face. 
  (Row 2)  MLLE, t-SNE, and UMAP fail to preserve the distributional structure of the three Gaussians. 
  (Row 3) t-SNE, UMAP, and TAE fail to unfold the 3-D SwissRoll.
  (Row 4) At every location of the 3 points, there are 300 identical points. However, t-SNE and UMAP produce dispersed outputs, the reason being that they set a minimum distance threshold to separate every pair of points artificially, which can distort the geometry.
  (Row 5) MLLE fails on the MNIST data; the boundaries of the digit clusters should be fading away, but t-SNE and UMAP produce un-naturally sharp boundaries, and TAE results in even more un-natural distributions, distorting the distributional structure. In contrast, DMT produces the overall best results in preserving local and global geometry and sample distribution for all the datasets. A quantitative evaluation in Table~\ref{tab:ToyQuantitative} in the Experiments section further illustrate these advantages.}
 \label{fig:QualitativeComparison}
\end{figure}

In this paper, we propose a deep manifold learning framework, called {\em deep manifold transformation} ({\bf DMT}), to overcome the above problems in {\em unsupervised} NLDR and manifold learning. DMT is a deep learning method enhanced by {\em local geometry-preserving} ({\bf LGP}) constraints.  Minimizing LGP losses regularizes network training and encourages a network to be a one-to-one transformation between two metric spaces (layers). Such a DMT network maps points between training samples in the input space to interpolated points between corresponding points in the latent space. This property makes the transformation well-behaved to mitigate the problems above.

The main contributions of this paper are summarized as follows: 
\begin{itemize}
    \item {\bf The DMT Framework.} We propose the DMT framework for deep manifold learning. DMT uses cross-layer metric- or geometry-preserving constraints to obtain an embedding and an NLDR transformation. It is general enough to be applicable to most existing neural networks for regularizing their learning solutions. 
    
    \item {\bf LGP Formulations and DMT-encoder.} We propose LGP formulations based on cross-layer differences in pairwise distance or cross-layer divergence in pairwise similarity. Based on this, we develop DMT-encoder and DMT-autoecoder for DMT learning. DMT-encoder can accomplish unsupervised deep manifold learning for NLDR without a decoder. It also provides a transformation applicable to unseen data, in addition to producing an embedding. 
    
    \item {\bf Comparative Evaluation.} We provide extensive experimental results, with qualitative and quantitative comparisons and ablation study, to demonstrate significant advantages of DMT in comparison with popular leading algorithms. 
    
%    \item[(4)] Proposing using parameter continuation to prevent DMT training from falling into bad local optima, i.e., failure in unfolding the manifold into a plane in latent space.

\end{itemize} 

In the following, Section~2 introduces the DMT framework, notation, and basic LGP concepts; Section~3 describes the cross-layer LGP constraint and losses and then DMT-encoder and DMT-autoencoder; Section~4 presents comparative experimental results and ablation study. 

%More details of the experimental results can be found in Supplements. %\textcolor{red}{The code is provided in the Supplementary Material}.

%%%%%%%%%%%%%%%%%%%%%%%%%%%%%%%%%%%%%%%%%%%%%%%%%%
\section{Deep Manifold Transformation}
%%%%%%%%%%%%%%%%%%%%%%%%%%%%%%%%%%%%%%%%%%%%%%%%%%%

DMT imposes LGP constraints on a deep neural network to achieve manifold learning-based NLDR. As a method for deep manifold learning, DMT gradually unfolds manifolds in the high-dimensional input space, layer by layer, onto regions in a latent Euclidean space such that Euclidean distance can be used reasonably to approximate the geodesic distance along a manifold in the input. The DMT framework is illustrated in Figure~\ref{fig:DMT+} with an autoencoder architecture consisting of an $L$-layer encoder and an $L$-layer decoder. Next, we introduce the notations, concepts, and local distance-based cross-layer LGP loss.

\begin{figure*}[!htb]
  \centering{\includegraphics[width=6.3in]{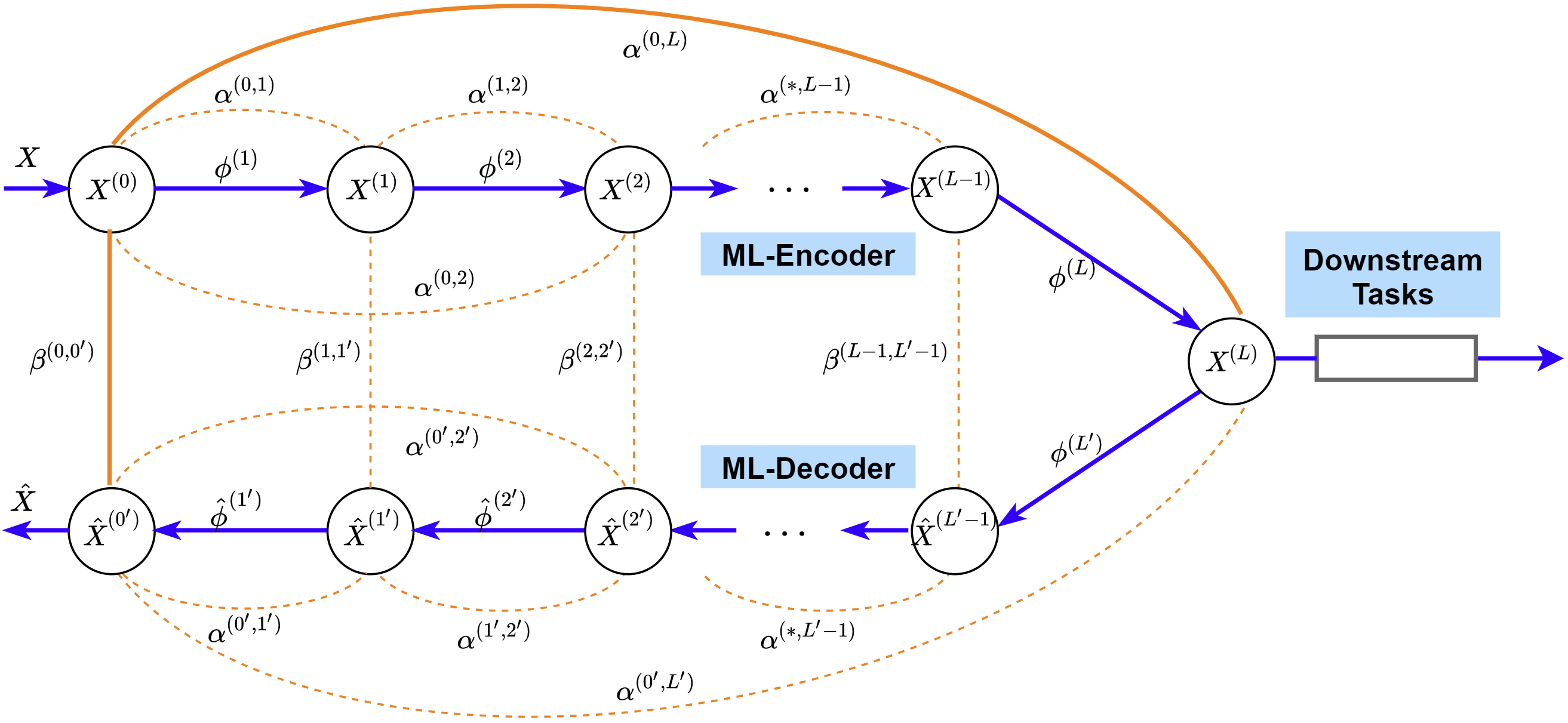}}
  \caption{Illustration of the DMT framework with cross-layer metric-preserving constraints (best viewed in color). The standard autoencoder consists of a cascade of transformations $\phi^{(l)}$ shown in the blue arrows, from the input $X=X^{(0)}$ to the latent layer $Z=X^{(L)}$ (NLDR) and then from $Z$ to the output layer $\hat X=X^{(0')}$ (data generation). It imposes a reconstruction loss between the input $X=X^{(0)}$ and output $\hat X=X^{(0')}$, a type of cross-layer {\em non-metric-preserving} constraint shown in the solid straight line in orange color with the weight $\beta^{(0,0')}$. The reconstruction loss may be defined as $\sum_i \|x^{(0)}_i-x^{(0')}_i\|^2$. A DMT-encoder imposes {\em local  geometry-preserving} (LGP) constraints across-layers $l$ and $l'$ on top of the standard encoder to restrict the transformations $\phi^{(l)}$ to satisfy LGP constraints as far as possible. The most essential cross-layer constraint among others is the one between the input ($l=0$) and latent ($l'=L$) layers, that is, between $X$ and $Z$ as shown in the solid arc in orange color with the weight $\alpha^{(0,L)}$. Other cross-layer constraints in dashed arcs with weights $\alpha^{(l,l')}$ may be included optionally. As mentioned earlier, a DMT-encoder can perform NLDR and embedding learning without a decoder. A DMT-decoder can be added to provide extra constraints in reconstruction errors weighted by $\beta^{(l,l')}$. }
 \label{fig:DMT+}
\end{figure*}

%%%%%%%%%%%%%%%%%%%%%%%%%%%%%%%%%%%%%%%%%%%%%%%%%%
\subsection{Data, Metric Space and Graph}
%%%%%%%%%%%%%%%%%%%%%%%%%%%%%%%%%%%%%%%%%%%%%%%%%%%

Let $X=\left\{x_1,\ldots,x_M\right\}$ be a dataset of $M$ points in the input space $\R^N$ and $\cS=\left\{1,\ldots,M\right\}$ the index set. We assume that these points are samples on one or several manifolds $\cM_X\subset\R^N$, with the intrinsic dimensionality of each manifold being up to $n$. Such an $n$ is the lowest possible dimensionality for $Z=X^{(L)}$ to encode the information the manifolds losslessly.  Whereas manifolds are continuous hyper-surfaces, DMT works on discrete samples. Therefore, DMT needs to define a neighborhood system $\cN_X$ on $X$.

%We assume that the tangent subspace $T_x(\cM_X)$ at any $x\in\cM_X$ is locally (in the neighborhood of $x$) isomorphic to the Euclidean space of $dim(\cM_X)$ at that point, which is the basis for manifold learning with DMT.

Without prior knowledge about $X$, we use Euclidean distance $d_X$ as the closeness measure for convenience. This constitutes a metric space $(X^{(l)},d^{(l)}_X)$ for each layer $l\in\{0,\cdots,L\}$. Let $d^{(l)}_{ij}=d_X(x^{(l)}_i,x^{(l)}_j)$, and the distance matrix be $D_X^{(l)}= [d^{(l)}_{ij}]_{i,j=1}^M$. (In the following, the subscript and superscript may be dropped to brief $d_X$ as $d$, $D^{(l)}_X$ as $D$, and $(X^{(l)},d^{(l)}_X)$ as $(X,d)$ without confusion in the context.) Although Euclidean distance $d^{(0)}_X$ in the input space is generally not an appropriate measure for nontrivial data analysis tasks, the goal of DMT is to make use of information contained in $(X^{(0)}, d_X^{(0)})$ as a start point to transform the manifolds represented by $X=X^{(0)}\subset \cM_X\subset\R^N$ nonlinearly onto regions in the lower dimensional latent space $(X^{(L)},d^{(L)}_X)$ such that Euclidean distance (or inner product) can provide a sensible distance measure therein.

Now, a graph $\cG(X^{(l)},D^{(l)}_X,\cN^{(l)}_X)$ can be constructed from each $(X^{(l)},d^{(l)}_X)$, as a discrete representation of $\cM_X$ at layer $l$. The neighborhood system for layer $l$ is $\cN^{(l)}_X=\{\cN^{(l)}_{i} \ | \ \forall i\in\mathcal{S}\}$ where $\cN^{(l)}_{i}$ is the set of neighbor indexes for $x^{(l)}_i$. There are two types: (1) {\em binarized neighborhood} defined by either the $k$-NN of $x^{(l)}_i$ or the neighbors within its $r$-ball; (2) {\em weighted neighborhood} composed of all $j\neq  i$ with a weight $u_{ij}\in(0,1]$. Generally for real world data, only a small number of $u_{ij}$ values are significantly nonzero. We hope as layer $l$ goes deeper,  $u^{(l)}_{ij}$ would become a more sensible proximity metric.

\subsection{Manifold Transformation on Graphs}
%%%%%%%%%%%%%%%%%%%%%%%%%%%%%%%%%%%%%%%%%%%%%%%%%%%%%%%%%%%%%%%%%%%%%%

Manifold learning for NLDR with an LGP property finds a local homeomorphism
$\Phi : \cG\left(X,D_X,\cN_X\right)\rightarrow\ \cG\left(Z,D_Z,\cN_Z\right)$
which not only transforms from $X$ to $Z$ but also preserves local metrics in the two spaces. While $\Phi$ can be highly nonlinear and complex, it may be decomposed into  a cascade of $L$ less nonlinear, local homeomorphisms $\Phi=\ \phi^{(L)}\circ\cdots\circ\phi^{(2)}\circ {\phi}^{(1)}$. We use an $L$-layer DMT-encoder to achieve this. 

%(The inverse process, {\em manifold data generation}, is also a local homeomorphism $\Phi^{-1} : \cG\left(Z,D_Z,\cN_Z\right)\rightarrow \cG\left(X,D_X,\ \cN_X\right)$.) 
 
The layer-wise transformation can be written as
$
X^{(l+1)}=\phi^{(l)}\ (X^{(l)}, \ D^{(l)},\ \cN^{(l)}\ | \ W^{(l)})
$
where $W^{(l)}$ are the neural network's transformation matrices to be learned. A nonlinear activation follows this as usual. The updates for all $i\in\cS$ lead to local changes in graphs, namely, proper local deformations for unfolding $\cM^{(l)}_X$.  $D^{(l)}$ at the concerned layers are updated according to $X^{(l)}$. 

%%%%%%%%%%%%%%%%%%%%%%%%%%%%%%%%%%%%%%%%%%%%%%%%%%%%%%%%%%%%%%%%%%%%%%
\subsection{Local Distance-Preserving Constraint}
%%%%%%%%%%%%%%%%%%%%%%%%%%%%%%%%%%%%%%%%%%%%%%%%%%%%%%%%%%%%%%%%%%%%%%

LGP imposed metric discrepancy constraints across-layers.  One possible formulation is that based on isometry, which preserves local distances in a straightforward manner. We call it {\em locally isometric smoothness} (LIS). The LIS loss is introduced below as a precursor to the LGP loss.

Consider an effective transformation $\phi^{(l,l')}$ between any two layers $l$ and $l'$ along the cascade of transformations $\Phi$. The LIS loss requires that pairwise local distances be preserved by $\phi^{(l,l')}(W)$, and implements this by minimizing the following cross-layers loss
\begin{equation*}
  \begin{aligned}
 &\cL_{LIS}^{(l,l')}(W  \ | \ X^{(l)},X^{(l')}) = \\
 & \ \ \ \ \cL_{iso}(W  \ | \ X^{(l)},X^{(l')}) + \mu \cL_{push}(W  \ | \ X^{(l)},X^{(l')}),
  \end{aligned}
%  \label{equ:loss-LIS}
\end{equation*}

where $\mu\ge 0$ is a weight. The first term simply aims to preserve corresponding pairwise distances across layers
\begin{equation}
  \begin{aligned}
\cL_{iso}^{(l,l')}&(W  \ | \ X^{(l)},X^{(l')}) = \\
 & \sum_{i\in\cS}\sum_{j\in\cN_i} \alpha^{(l,l')}
  \left|d(x_i^{\left(l\right)},x_j^{(l)})-d(x_i^{(l')},x_j^{(l')})\right|,
  \end{aligned}
  \label{equ:loss-iso}
\end{equation}
where $\cN_i$ in the binary neighborhood system are determined using the $k$-NN or $r$-ball rule, $\alpha^{(l,l')}$ are the weights for the importance of the constraint across layers $l$ and $l'$. The second term is a "push-away" loss 
\begin{equation}
  \begin{aligned}
& \cL_{push}(W  \ | \ X^{(l)},X^{(l')}) = \\
 & \ \ \ \ -\sum_{i\in\cS}\sum_{{j\not\in\cN^{(l)}_i} \& {d(x_i^{(l')},x_j^{(l')})<B}}
 d(x_i^{(l')},x_j^{(l')}),
  \end{aligned}
\label{equ:loss-push}
\end{equation}
where $B$ is a distance threshold. This term makes pairs of $(x_i,x_j)$, which are non-neighbors ($j\not\in\cN^{(l)}_i$) at layer $l$ but nearby ($d(x_i^{(l')},x_j^{(l')})<B$) at layer $l'$, repel each other so as to unfold or flatten the manifolds. $\cL_{iso}$ and $\cL_{push}$ are equivalent to the ‘‘local stress’’ and ‘‘repulsion’’ in \cite{CHEN-JASA-2009}, respectively. The parameter $\mu$ starts from a positive value at the beginning of DMT learning to enable the auxiliary term $\cL_{push}$ and gradually decreases to $0$ so that finally only the real objective $\cL_{iso}$  takes effect. 

It can be shown that minimizing $\cL_{iso}$ leads to optimized bi-Lipschitz continuity (see Supplements~A.1).  Lipschitz continuity has been used to formulate regularizers for stabilizing neural networks \cite{Bartlett-NIPS-2017,Anil-ICML-2018,Weng-ICLR-2019,Cohen-ICML-2019,Zhou-Lip-GAN-ICML2019,Qi-LS-GAN-IJCV2020}. Those articles impose the Lipschitz constraint on either transformation matrices $W$ or gradient projection matrices at individual layer(s) rather than across layers as in DMT. Moreover,  those Lipschitz regularizers generally deviate optima of the original loss function. In contrast, the cross-layer LIS loss constitutes the target loss for local-distance preservation;  in the meantime, it also serves as a regularizer without deviating the target loss's objective because the target loss and the regularizer are the same thing. Thereby, LIS kills two birds with one stone.

\section{Local Geometry-Preserving Constraint}
%%%%%%%%%%%%%%%%%%%%%%%%%%%%%%%%%%%%%%%%%%%%%%%%%%%%%%%%%%%%%%%%%%%%%%

LGP uses two strategies beyond LIS: (1) converting pairwise distances nonlinearly to {\em similarities} and using them as adaptive edge weights for the weighted neighborhood system, and (2) using a divergence-based loss in place of the straight distance difference-based loss of LIS. 
This section will formulate nonlinear conversion from distance to similarity, the divergence-based cross-layer LGP loss, and DMT-encoder and DMT-autoencoder.

%%%%%%%%%%%%%%%%%%%%%%%%%%%%%%%%%%%%%%%%%%%%%%%%%%%%%%%%%%%%%%%%%%%%%%
\subsection{Distance-Similarity Conversion}
%%%%%%%%%%%%%%%%%%%%%%%%%%%%%%%%%%%%%%%%%%%%%%%%%%%%%%%%%%%%%%%%%%%%%%

The raw $d(x_i,x_j)$ at the input layer is calibrated locally about $x_i$ into $d_{j|i}\defeq d(x_i,x_j)-\rho_i \ge 0$
where $\rho_i = \min_j \{d(x_i,x_j) \ | \ j\in \cN_i\}$ is the nearest neighbor distance. Then, $d_{j|i}$ are converted to { similarities} $u_{j|i}=g(d_{j|i}) \in (0,1]$ where $g$ is a {\em nonlinear}, monotonically decreasing function of $d_{j|i}\in \R_{\ge 0}$.

More specifically, $u_{j|i}(\sigma,\nu) = g(d_{j|i} \ | \ \sigma,\nu)$ is the strength of the directional edge $i\to j$, in which $\sigma$ is the scale parameter for all $x_i$ and $\nu\in\R^+$ controls the shape in a similar way to the degree of freedom (DoF) in the $t$-distribution. In this paper, we use the following normalized squared $t$-distribution
\begin{equation}
  \begin{aligned}
  g(d_{j|i} \ | \ \sigma,\nu) 
  = C_\nu  \left(1+\frac{d_{j|i}^{2}}{\sigma \ \nu } \right)^{-(\nu+1)},
  \end{aligned}
  \label{equ:t2-distribution}
\end{equation}
in which
\begin{equation}
C_\nu  = 2\pi \left(\frac{\Gamma\left(\frac{\nu+1}{2}\right)}
{\sqrt{\nu \pi} \Gamma\left(\frac{\nu}{2}\right)}\right)^2,
\label{equ:t2-nu}
\end{equation}
is the normalizing function of $\nu$ which sets the limit $\lim_{\nu\to +\infty} g(0 \ | \ \sigma, \nu)=1$. The scaling parameter $\sigma>0$ is estimated from the data by best fitting to the following the equation
$
\sum_{j\neq i} u_{j|i}(\sigma,\nu)=\log_2 Q
%\label{equ:sigma_i}
$
for a given perplexity-like hyper-parameter $Q$. 

%Other choices of $g(\cdot)$ are possible, including Cauchy functions \cite{Maaten-tSNE-2014} and the fitted polynomial function \cite{mcinnes2018umap}. However, we find Equ.~(\ref{equ:t2-distribution}) a good choice not only because it produces better results than those but also because we can use $\nu$  as a continuation parameter to avoid bad local minima. (Manifold learning-based results depend much on initialization, and a few random restarts are often needed for the existing algorithms to achieve a decent solution \cite{vladymyrov2012partial,yang2015majorization,kobak2019umap}).

%From $u^{(l)}_{ij}$ of layer $l$ can be constructed a weighted graph $\gG(\cS, X^{(l)},U^{(l)})$ consisting of a set $\cS$ of nodes with node attributes $X^{(l)}$ and edge attributes (weights) $U^{(l)}=\{ u^{(l)}_{ij} \ge \epsilon>0 \ | \ \forall i,j\in \cS \}$. The global structure of a manifold is discovered from local geometry of data through the graph $\gG$. 
%While $\sigma_i$ and $\rho_i$ need to be computed as local data statistics around $x_i$, the necessity becomes not so much as the layer goes deeper after stages of nonlinear manifold unfolding. Therefore, for the subsequent latent layers, they are simply set as $\sigma'_i = 1$ and $\rho'_i = 0$ (this also ensures the symmetry $u'_{j|i}=u'_{j|i}$ automatically). 

Since $u_{j|i}\neq u_{i|j}$ is generally non-symmetric, symmetrization is performed as $u_{ij}=u_{j  |  i}+u_{i | j} - u_{j  |  i} u_{i | j}$. The similarities $u_{ij}$ are confined in the range $(0,1]$ to form a fuzzy set but their sum over $i$ and $j$ needs not to be 1 to be a probability distribution. The complement set $1-u_{ij}$ can be considered as {\em dissimilarity}.  

The LGP loss is defined as the following what we call {\em two-way divergence}
%\newpage
% \begin{equation}
\begin{equation}
  \begin{aligned}
    \cL&_{LGP}^{(l,l')} (W  \ | \ X^{(l)},X^{(l')}) = \\
  &  \sum_{i,j \in \cS, i \neq j}
    u^{(l)}_{i j} \log \frac{u^{(l)}_{i j}}{u^{(l')}_{i j}}
    + (1-u^{(l)}_{i j}) 
    \log 
    \frac{ 1-u^{(l)}_{i j}} {1-u^{(l')}_{i j}} .
  \end{aligned} 
  \label{equ:loss-LGP}
\end{equation}

The above formulation is also known as the fuzzy information for discrimination \cite{Bhandari-IS-1993} and the fuzzy set cross-entropy in UMAP \cite{mcinnes2018umap}. 

The first term on the right-hand side of Equ.~(\ref{equ:loss-LGP}) measures divergence between the fuzzy similarity sets $u^{(l)}_{i j}$ and $u^{(l')}_{i j}$ and imposes attraction forces between nearby (intra-manifold) point pairs $(i,j)$. It could be considered as a nonlinear divergence version of Equ.~(\ref{equ:loss-iso}), namely, the difference $\log u^{(l)}_{i j} - \log u^{(l')}_{i j}$ weighted by the similarity $u^{(l)}_{i j}$. The second term exerts dissimilarity-based repulsion forces between far-away (inter-manifold) pairs and is an divergence version of Equ.~(\ref{equ:loss-push}), namely, the difference $\log(1-u^{(l)}_{i j}) - \log(1-u^{(l')}_{i j})$ weighted by the dissimilarity $1-u^{(l)}_{i j}$. Therefore, $\cL_{LGP}$ can be considered as an upgraded version of $\cL_{LIS}$ in that it renders DMT more nonlinearity and flexibility for non-isometric deformations. By using the LGP constraint, DMT transforms intra-manifold points  to a cluster in the latent space, mainly as the result of the first term, and pushes away inter-manifold point pairs from each other to different clusters, mainly due to the second term. 

\subsection{DMT-Encoder and DMT-Autoencoder}
%%%%%%%%%%%%%%%%%%%%%%%%%%%%%%%%%%%%%%%%%%%%%%%%%%%%%%%%%%%%%%%%

{DMT-encoder} imposes cross-layer constraints on a conventional encoder, such an MLP, as the sole loss as well as the regularizer. A DMT-encoder has the following form
\begin{equation}
\cL_{Enc}(W)=\sum_{(l,l')}\alpha^{(l,l')} \cL_{LGP}(W  \ | \ X^{(l)},X^{(l')}),
\label{equ:loss-encoder}
\end{equation}
where $\alpha^{(l,l')}$ are the weights for pairwise losses (cf. Figure~\ref{fig:DMT+}). As mentioned earlier, DMT-encoder can learn an NLDR transformation unsupervisedly without the need for a decoder, a major difference from an autoencoder. The following is the DMT-encoder pseudo-code.

\begin{algorithm}[ht]
\caption{DMT\_Encoder}
% \SetAlgoLined
\textbf{ Input }: Data:$X^{(0)}$, learning rate $lr$, epochs $E$, number of encoder layers $L$, Weight hyperparameter $\alpha$,  $\nu_{List}$, $Q$, \\
Calculate $d_{i|j}^{(0)}$ and $\sigma^{(0)}$\\
% Calculate $\sigma_i^{(0)}$ with (\ref{equ:sigma_i})\\
Calculate $u_{ij}^{(0)}$ with (\ref{equ:t2-distribution})\\
Initialize the neural network $\phi_{Enc}(\ \ \cdot\ \ |W_{Enc})$\\
While$\{$ $i=0$; $i<E$; $i$++$\}${\\
    \textcolor{white}{lllllll} $\nu \longleftarrow \nu_{List}[i]$ \\
    \textcolor{white}{lllllll} Calculate $L$ layer's embedding \\ 
    \textcolor{white}{lllllllll} $X^{(L)} \longleftarrow \phi_{Enc}(X^{(0)}|W_{Enc})$\\
    \textcolor{white}{lllllll} Calculate $u_{ij}^{(L)}$ with (\ref{equ:t2-distribution})\\
    % \While{$l'=l$; $l'<=L$; $l'$++}{
    \textcolor{white}{lllllll} Calculate the DMT losses, $ \mathcal{L}_{Enc}^{(0,L)} $ with (\ref{equ:loss-encoder})
    \\
    \textcolor{white}{lllllll} Update parameters: \\
    \textcolor{white}{lllllll} $W_{Enc} \longleftarrow W_{Enc} - lr \cdot \alpha \frac{ \partial \mathcal{L}_{Enc}^{(0,L)} }{\partial W_{Enc}}$
}
\end{algorithm}

The unsupervised DMT-encoder differs from self-supervised encoders \cite{Chen-SimClr-Arxiv-2020,He-MoCo-CVPR-2020} in their input data types. The latter ones are for image data where convolutional kernels can be applied to extract nonlinear invariants, whereas DMT works on non-image data. 

A DMT-encoder can be augmented into a {DMT-autoencoder}. Once trained, the decoder par of the {DMT-autoencoder} can be used to generate new data of the learned manifolds. The loss function is composed of two loss terms with the weight matrices $W=[W_{Enc},W_{Dec}]$:
\begin{equation}
    \cL_{AE}(W) = \cL_{Enc}(W_{Enc}) + \beta \cL_{Rec}(W_{Dec}),
\label{equ:lae}
\end{equation}
where the $\beta$ is the weight, and the reconstruction loss is $\cL_{Rec}(W)= \sum_{i=1}^{M}\parallel{x_i}^{(0)}-{x}_i^{(0')}\parallel^2$.

\section{Experiments}
%%%%%%%%%%%%%%%%%%%%%%%%%%%%%%%%%%%%

\subsection{Experimental Setup and Performance Metricc}

The following experiments are aimed to evaluate DMT in comparison with other {\bf four algorithms}: MLLE \cite{zhang2007mlle}, t-SNE \cite{Maaten-tSNE-2014}, UMAP \cite{mcinnes2018umap},  and TAE \cite{Moor-TAE-ICML2020} in terms of numerical metrics and visualization. The results of ISOMAP, LLE, AE, and other related methods are not as appealing, so they are not included in the paper.

%\subsubsection{Datasets.}

{\bf Nine datasets} are used. Four of them are toy datasets:
(1) SwissRoll (3-D),
(2) Smile Face (2-D),
(3) Three Gauss (100-D), and
(4) Repeat Points (100-D), and the other five are real-world datasets: 
(5) Coil20 (128$\times$128-D), 
(6) Coil100 (128$\times$128$\times$3-D) \cite{nene1996columbia100}, 
(7) MNIST (28$\times$28-D),
(8) FMNIST (28$\times$28-D) \cite{xiao2017fashion} and 
(9) CIFAR-3 (58$\times$58$\times$3-D), a subset of CIFAR-10 composed of classes 0, 4, and 8. 

%\subsubsection{Performance Metric.}

{\bf Six performance metrics} are used for quantitative comparison: 
(1) Continuity (CON) \cite{venna2006visualizing} measures how well the $k$-NN of a point are preserved when going from the latent to the input space (layer); the larger, the better.
(2) Trustworthiness (TRU) \cite{venna2006visualizing} is similar to CON but going from the input to the latent space; the larger, the better.
(3) Mean relative rank error (RRE) \cite{lee2009quality} measures the average of changes in neighbor ranking between the two spaces; the smaller, the better.
(4) Distance Pearson Correlation (DPC) is the Pearson correlation coefficient between corresponding pairwise distances in the two spaces; the larger, the better.
(5) Scatter Rank Mismatch (SRM), designed by the authors, measures the total absolute difference between class scatter ranks in the two spaces, where the scatter of class $c$ is calculated as $\xi_c=\frac{1}{\#\cS_c} \sum_{i\in\cS_c}\|x_i-\bar x_c\|$ in which $\bar x_c$ is the class mean vector and $\|\cdot\|$ is the 2-norm of a vector; the smaller, the better.
(6) Accuracy (ACC) of an SVM linear classifier working in the latent space; the larger, the better.  Among these, CON, TRU and RRE measure local structures' consistency between the input and latent spaces, defined based on $k$-NN. It is set as $k=M/20$ where $M$ is the number of training samples. Their mathematical definitions are given in Supplements A.2.

%\subsubsection{Hyper-parameter settings.}  

% The hyper-parameters are set mostly the same, including network structure, learning rate, optimizer, epoch,  for the nine datasets. If a new dataset is used, the user only needs to adjust a small number of hyper-parameters, such as perplexity-like $Q$ and batchsize. 

% %Dataset-specific hyper-parameters used for different datasets are provided in Supplements A.3.

% \begin{table}[!hbt]
% \small
% \centering
% \caption{Hyperparameters of DMT for different datasets}
% \begin{tabular}{@{}cccc@{}}
% \toprule
% Dataset                & Q in Equ.~(\ref{equ:sigma_i}) & Batchsize \\ \midrule
% Smile                  & 40                            & 3000      \\
% ThreeGaussian          & 40                            & 900       \\
% Repeated Pts           & 40                            & 900       \\
% Swiss Roll             & 40                            & 800       \\
% Coil20                 & 10                            & 144      \\
% Coil100                & 10                            & 720      \\
% Mnist                  & 20                            & 1500      \\ 
% FMnist                 & 20                            & 1500      \\ 
% Cifar3                 & 10                            & 1500      \\ 
% \bottomrule
% \end{tabular}
% \label{tbl:hyper-paras}
% \end{table}

The hyper-parameters are set mostly the same for the nine datasets: the network structure is set to be [-1, 600, 500, 400, 300, 200, 2] where -1 represents the input data's dimension, batchsize=1500, learning rate=0.001, the optimizer is set to Adam.  To obtain a stable embedding, we vary $\nu$  slowly from 0.001 to $\nu_{end}$ in an exponential rate, as is often done with the learning rate.
Dataset-specific hyper-parameters used for different datasets are provided in Table~\ref{tbl:hyper-paras}.
If a new dataset is used, the user only needs to adjust a small number of hyper-parameters, such as $Q$ and $\nu$. 

\begin{table}[!hbt]
% \small
\centering
\caption{Dataset-specific Hyper-parameters}
\begin{tabular}{@{}c|cc@{}}
\toprule
Dataset                & $\nu_{end}$    & $Q$  \\ \midrule
Smile Face             &  100           & 40  \\
Three Gauss            &  100           & 40  \\
Repeat Points          &  100           & 40  \\
Swiss Roll             &  100           & 40  \\
Coil20                 &  100           & 10  \\
Coil100                &  100           & 10  \\
MNIST                  &  0.001         & 20  \\ 
FMNIST                 &  0.001         & 20  \\ 
Cifar3                 &  0.001         & 10  \\ 
\bottomrule
\end{tabular}
\label{tbl:hyper-paras}
\end{table}

% * 所有结果的测试时k=M/20（RRE和Tru），For, t

%Table~\ref{tbl:Main_hyper-paras}. 

%\input{TPAMI_Submit/table_0.tex}

%%%%%%%%%%%%%%%%%%%%%%%%%%%%%%%%%%%%%%%%%%%%%%%%%%
\subsection{Results on Toy Datasets}
%%%%%%%%%%%%%%%%%%%%%%%%%%%%%%%%%%%%%%%%%%%%%%%%%%

While qualitative results on the four toy datasets are compared in Figure~\ref{fig:QualitativeComparison}, the corresponding quantitative comparison is provided in Table~\ref{tab:ToyQuantitative}. It shows that DMT achieves the best or equally best results as compared to the other four methods in all the three local neighborhood-based metrics RRE, CON and TRU.

%Similar to what is shown in the visualization results, our method has excellent performance on all four datasets. With the help of the metrics, we can judge DMT on the TreeGaussian dataset with outperforming TAE. The algorithms for the three bijections on the Repeated PTS dataset have the same performance on the local metrics, although the visualization results show that DMT retains more global information.

\begin{table}[t]
  \centering
  \footnotesize
  \caption{Quantitative Comparison on Toy Datasets}
  \begin{tabular}{@{}cc|ccccc@{}}
    \toprule
                                                                              & Metr.  & MLLE           & t-SNE & UMAP  & TAE            & LAD            \\ \midrule
    \multirow{3}{*}{\begin{tabular}[c]{@{}c@{}}Swiss\\ Roll\end{tabular}}     & RRE    & 0.021          & 0.015 & 0.018 & 0.035          & \textbf{0.007} \\
                                                                              & CON    & 0.955          & 0.944 & 0.932 & 0.949          & \textbf{0.957} \\
                                                                              & TRU    & 0.965          & 0.989 & 0.989 & 0.964          & \textbf{0.999} \\\midrule
    \multirow{3}{*}{\begin{tabular}[c]{@{}c@{}}Smile\\ Face\end{tabular}}   & RRE    & 0.241          & 0.027 & 0.004 & 0.002          & \textbf{0.001} \\
                                                                              & CON    & 0.786          & 0.947 & 0.992 & 0.999          & \textbf{1.000} \\
                                                                              & TRU    & 0.715          & 0.872 & 0.995 & 0.999          & \textbf{1.000} \\\midrule
    \multirow{3}{*}{\begin{tabular}[c]{@{}c@{}}Three\\ Gauss\end{tabular}}   & RRE    & 0.179          & 0.107 & 0.122 & 0.137          & \textbf{0.097} \\
                                                                              & CON    & 0.858          & 0.998 & 0.998 & 0.890          & \textbf{1.000} \\
                                                                              & TRU    & 0.822          & 0.993 & 0.998 & 0.886          & \textbf{0.999} \\\midrule
    \multirow{3}{*}{\begin{tabular}[c]{@{}c@{}}Repeat\\ Points\end{tabular}}  & RRE    & \textbf{0.000} & 0.141 & 0.189 & \textbf{0.000} & \textbf{0.000} \\
                                                                              & CON    & \textbf{1.000} & 0.883 & 0.826 & \textbf{1.000} & \textbf{1.000} \\
                                                                              & TRU    & \textbf{1.000} & 0.874 & 0.861 & \textbf{1.000} & \textbf{1.000} \\\midrule
                                                                            \end{tabular}
    \label{tab:ToyQuantitative}
    \end{table}

%%%%%%%%%%%%%%%%%%%%%%%%%%%%%%%%%%%%%%%%%%%%%%%%%%%%%%%%
\subsection{Results on Real-World Datasets}
%%%%%%%%%%%%%%%%%%%%%%%%%%%%%%%%%%%%%%%%%%%%%%%%%%%%%%%%

% Our DMT maintains global and statistical properties equally well on Practical Real data. Our DMT also maintains global and statistical properties well on Practical Real data. A comparison of the metrics on the real dataset we used is shown in table~\ref{tab:PracticalReal}.
% We use Rre, Con, and Tru to evaluate local performance, use Pearson and KL to evaluate global performance, and use ACC combined with sample labels to evaluate the correctness of downstream classification tasks in the latent space.{\bf The experimental proof is shown }

% * 大部分指标是最强的，有些全局指标TAE高于我们，这可以解释：
%有方法测试近邻参数（我们是Q，tsne是peplecity，umap和mlle是n）（除了TAE，他没有近邻参数）【5，10，20，40，80，160，320，640】后，选出指标最好的展示。
% MLLE 的时间损耗非常巨大，并且内存溢出，没有列出这个方法的结果。

% 这个表格证明了如下几个事
%   * TAE过分注重全局，所以造成了下下游任务不好，可视化不好的问题
%   * UMAP，tSNE过分注重局部，所以造成了，全局不好，潜入无法提供方差信息等问题
% （1）tsne和umap在局部指标上很好(RRE, CON, TRU)，所以是一个注重局部的方法，在处理全局性能没有优势。
% （2）TAE 是一个很全局的方法，在全局指标上性能优于DMT(DPC)，在局部的性能和下游任务上的性能很差。可视化的结果也很差（引导读者看附录）
% （3）综合性能我们的最好，同时保证了全局性能和局部性能。可以提我们的保证几何结构的特性，升华一下主题。

Table~\ref{tbl:RealWorldQuantitative} provides an overall quantitative comparison of the 5 real-world datasets in terms of all 6 evaluation metrics. MLLE is not included because it runs out of time or memory for all datasets. DMT has an overall lower SRM (scattered rank mismatch)  than the other methods, suggesting that it has a better ability to preserve the distributional structure of data.  DMT has the highest ACC (accuracy) score on all five real-world datasets. This indicates that DMT delivers the best transformation for downstream tasks. 

\begin{table}[t]
  \centering
%   \footnotesize
  \caption{Quantitative Comparison on Real-world Datasets}
  \begin{tabular}{@{}cc|llll@{}}
    \toprule
    \multicolumn{1}{l}{}      & Metr.   & t-SNE          & UMAP           & TAE            & DMT            \\ \midrule
    \multirow{6}{*}{Coil20}   & RRE     & 0.038          & 0.042          & 0.061          & \textbf{0.032} \\
                              & CON     & 0.867          & 0.849          & 0.898          & \textbf{0.928} \\
                              & TRU     & 0.925          & 0.922          & 0.877          & \textbf{0.913} \\
                              & DPC     & 0.210          & 0.177          & 0.633          & \textbf{0.419} \\
                              & SRM     & 0.470          & 0.380          & \textbf{0.240} & 0.350           \\
                              & ACC     & 0.890          & 0.870          & 0.858          & \textbf{0.902}\\ \midrule
    \multirow{6}{*}{Coil100}  & RRE     & 0.091          & 0.109          & 0.073          & \textbf{0.060} \\
                              & CON     & 0.748          & 0.665          & 0.861          & \textbf{0.881} \\
                              & TRU     & 0.756          & 0.738          & 0.848          & \textbf{0.844} \\
                              & DPC     & 0.172          & 0.080          & 0.539          & \textbf{0.584} \\
                              & SRM     & 0.496          & 0.396          & 0.219          & \textbf{0.108} \\
                              & ACC     & 0.895          & 0.807          & 0.336          & \textbf{0.907} \\\midrule
    \multirow{6}{*}{MNIST}    & RRE     & 0.089          & 0.077          & 0.076          & \textbf{0.073} \\
                              & CON     & 0.855          & 0.837          & \textbf{0.876} & 0.840          \\
                              & TRU     & 0.872          & 0.887          & 0.881          & \textbf{0.896} \\
                              & DPC     & 0.364          & 0.339          & \textbf{0.583} & 0.457          \\
                              & SRM     & 0.880          & 0.880          & 0.520          & \textbf{0.320} \\
                               & ACC     & 0.834          & 0.966          & 0.755          & \textbf{0.967}\\\midrule
    \multirow{6}{*}{FMNIST}   & RRE     & 0.041          & 0.038          & 0.037          & \textbf{0.035} \\
                              & CON     & 0.934          & 0.938          & \textbf{0.970} & 0.939          \\
                              & TRU     & 0.949          & 0.947          & 0.952          & \textbf{0.958} \\
                              & DPC     & 0.526          & 0.597          & \textbf{0.780} & 0.533          \\
                              & SRM     & 0.440          & 0.440          & \textbf{0.160} & 0.320         \\
                              & ACC     & 0.692          & 0.687          & 0.606          & \textbf{0.700}   \\\midrule
    \multirow{6}{*}{Cifar3}   & RRE     & 0.112          & 0.110          & 0.122          & \textbf{0.104} \\
                              & CON     & 0.905          & 0.911          & 0.915          & \textbf{0.929} \\
                              & TRU     & 0.856          & 0.859          & 0.846          & \textbf{0.874} \\
                              & DPC     & 0.752          & 0.761          & 0.812          & \textbf{0.850} \\
                               & SRM     & \textbf{0.000} & \textbf{0.000} & 0.889          & \textbf{0.000}\\
                               & ACC     & 0.628          & 0.631          & 0.614          & \textbf{0.629}\\ \bottomrule
    \end{tabular}
    \label{tbl:RealWorldQuantitative}
  \end{table}

\begin{figure}[!htb]
  \centering
  \includegraphics[width=1.0\linewidth]{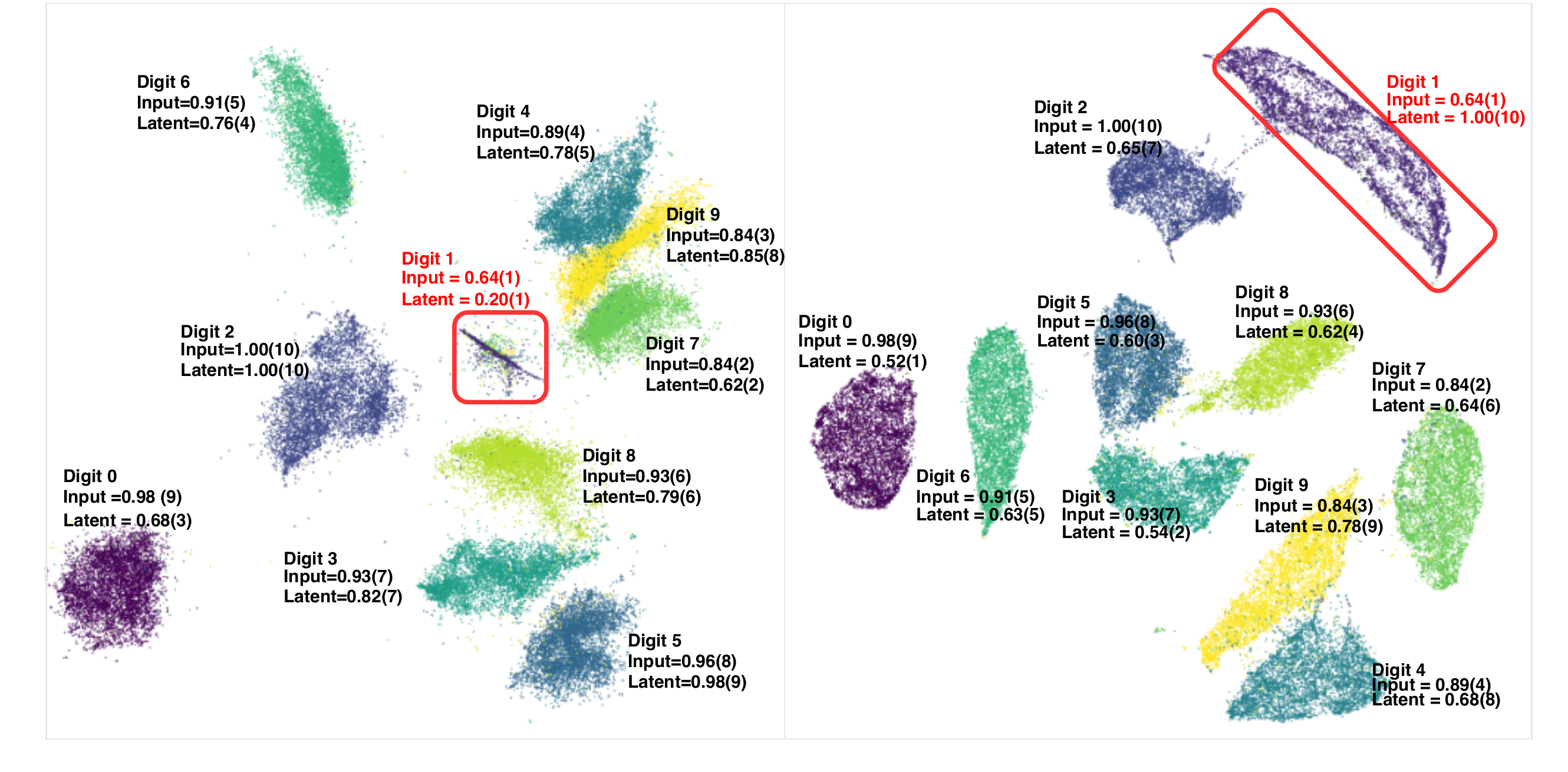}
    \caption{DMT (left) better preserves the distributional structure of data than UMAP (right).}
    \label{fig:MNIST-Distr-Structure}
\end{figure}

Figure~\ref{fig:MNIST-Distr-Structure} illustrates the ability of DMT to preserve the distributional structure of MNIST data compared to UMAP. The scatter values (the real numbers) and scatter ranks (in the brackets, the smallest being ranked \#1) of the 10 digits are computed from the data in the input space and latent space, respectively. We see that DMT better preserves the scatter ranks between the two spaces than UMAP. Taking a closer look at digit "1" which understandably has the smallest scatter of the ten digits, we see that DMT produces an embedding in which "1" has the smallest scatter (rank 1) of all the digits, whereas that of UMAP has the largest scatter (rank 10). 

Figure.~\ref{fig:Coil20-embeding-LISvsUMAP} compares the embeddings of Coil20 dataset produced by DMT and UMAP. It can be found that the embedding produced by DMT presents more details and integrity, while the embedding by UMAP presents crossovers and fragments. 

\begin{figure}[!htb]
  \centering
  \includegraphics[width=1.0\linewidth]{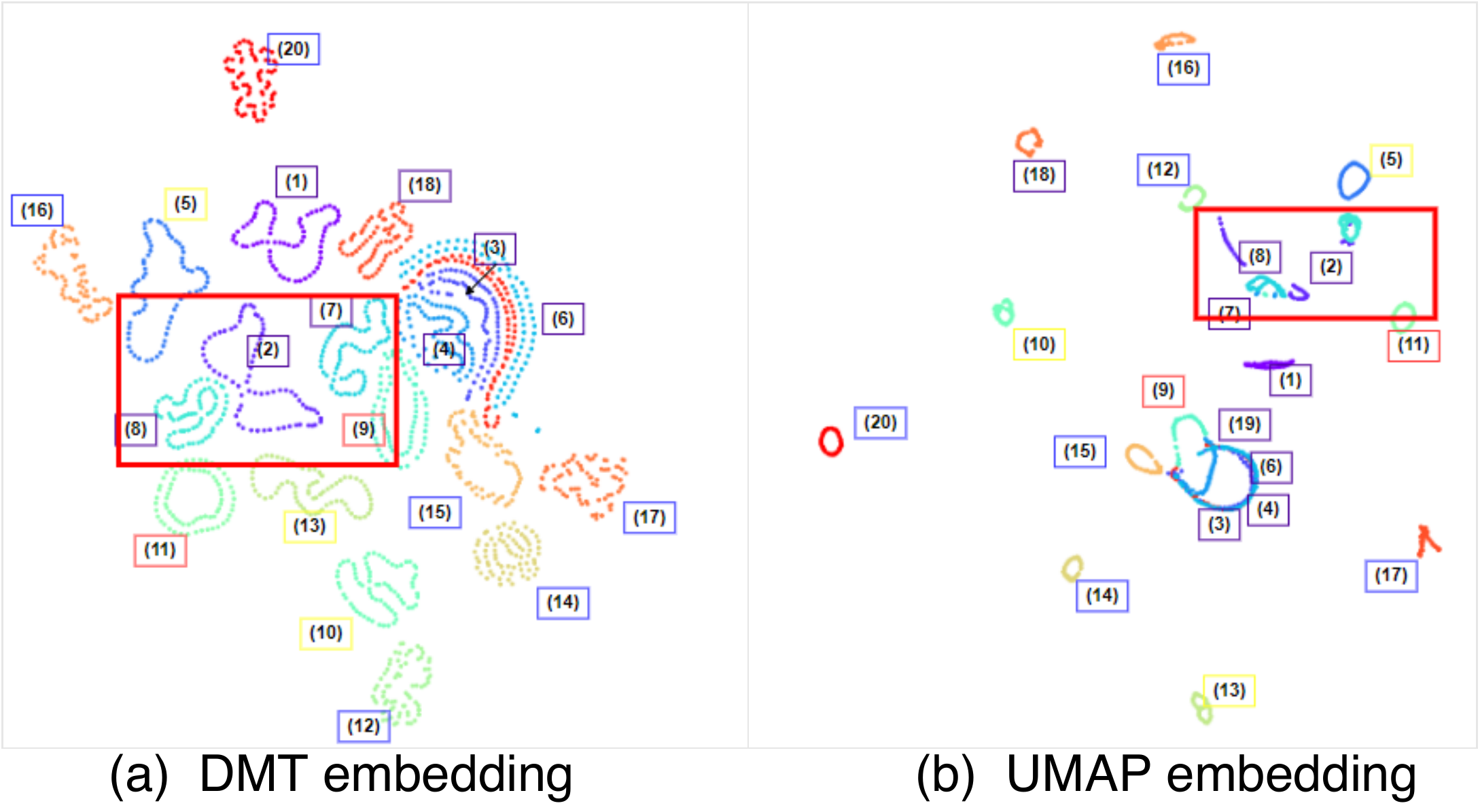}
    \caption{On the Coil20 dataset, the DMT ensures the integrity of its embedding whereas those of UMAP present crossover and tend to be fragmental.}
    \label{fig:Coil20-embeding-LISvsUMAP}
\end{figure}

%It can even allow us to detect objects' symmetry by analyzing the shapes of the DMT embeddings.

% \begin{figure}[!htb]
%   \centering
%   \includegraphics[width=3in]{pic/Avis-coil1000.drawio.png}
%     \caption{The DMT embeddings provide the information detailed enough to infer the objects' characteristics. According to the shape of the 2D embeddings, we can group object patterns into four categories: (1) If the embedding is an elongated ellipse shape, then the object is likely plannar and mirror-symmetric. (2) If the embedding is round-shaped, then the object can be rotationally symmetric. (3) If the embedding consists of nested double-rings, then the object can be double vertically mirror-symmetric. (4) Others.}
%     \label{fig:Coil20-embeding-LISvsUMAP}
% \end{figure}

% \subsection{Comparison of the Generalization Ability to Map Unknown Data}

\subsection{Manifold Data Generation}

% % Table~\ref{tab:performance-metrics-toy-real} compares performances of the five NLDR methods where \textbf{bold} numbers are the best results. The DMT-based DMT-encoder (DMT-encoder) has the overall best performance. Fig.~\ref{fig:mainvisA} visualizes some representative results, and more results are given in Appendix~A.3. 
% % Specifically, Fig.~A1 shows the result of DMT-encoder and several autoencoder methods on the test set. This shows that neural network-based models have good generalization capabilities (UMAP and t-SNE are parameter-free methods, so no results are reported).
% % Fig.~A2 and Fig.~A3 show the visualization results of the comparison methods on all nine datasets.

A DMT-decoder can be learned together with DMT-autoencoder for generating new data of the learned manifold. Figure~\ref{fig:vis-inter-main} compares DMT and TAE in manifold interpolation and image generation (reconstruction). Embeddings learned by DMT are nicely organized, whereas those by TAE contains many defects. The images generated by DMT present sharper boundaries for the following reasons: (1) Contours in DMT embeddings do not cross or overlap. Hence, there is little unwanted interference from manifolds of other objects. (2) DMT preserves more details in the learned manifolds, which makes it easier to train the DMT-decoder. In contrast, TAE does not have such a good foundation in latent point interpolation and image generation.

\begin{figure}[!htb]
  \centering
  \includegraphics[width=1.0\linewidth]{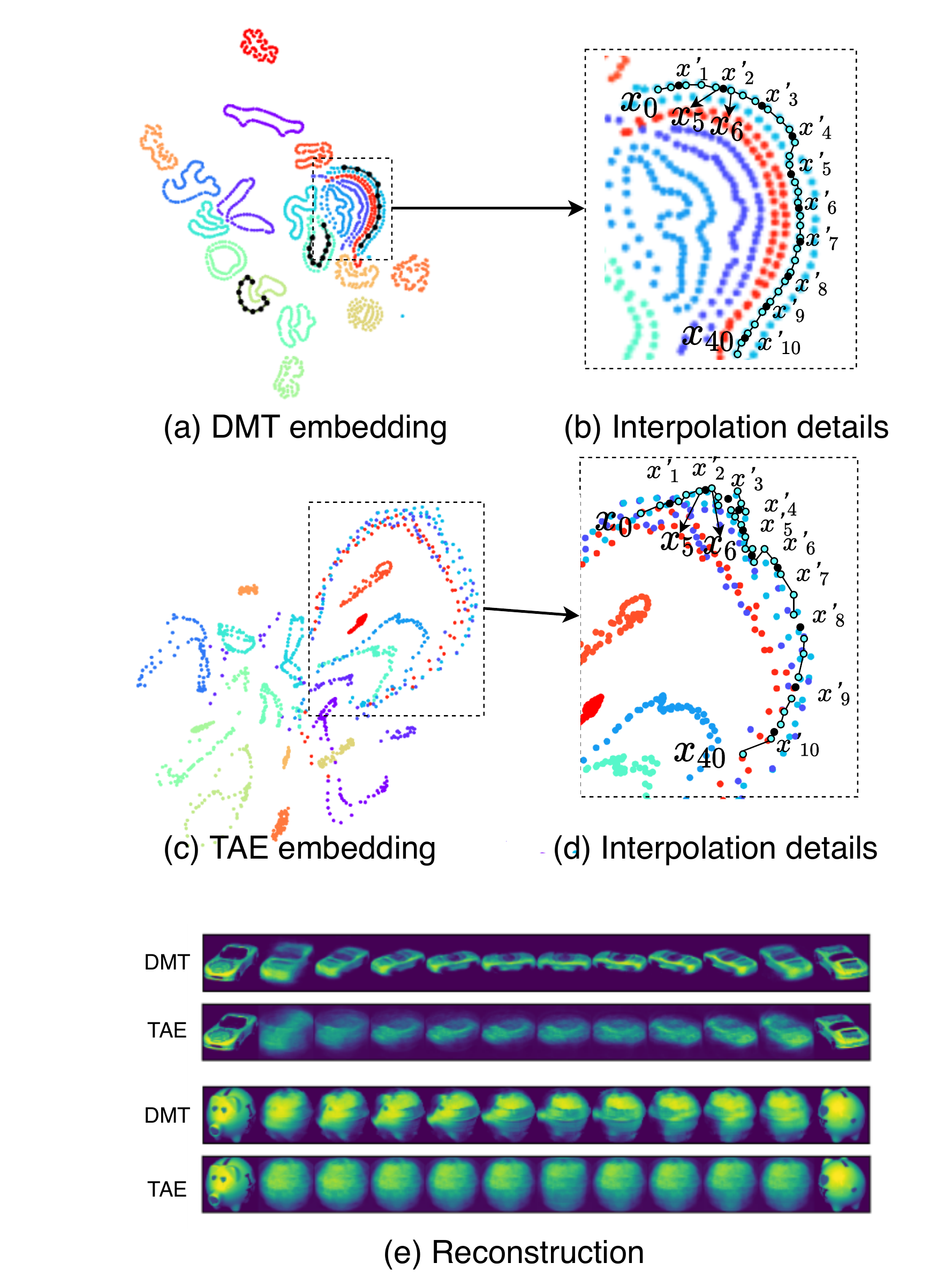}
    \caption{Coil20 images generated and interpolated by DMT-decoder vs. TAE-decoder. (a) DMT embedding in 2-D. (b) Interpolation: $x_0$ and $x_{40}$ are chosen as the two end-points in the embedding; 10 points are sampled equi-distance apart along the embedding. (c) \& (d) Interpolated samples and images generated by TAE. (e) Images reconstructed from the sample points. }
    \label{fig:vis-inter-main}
\end{figure}

\begin{table*}[t]
% \begin{center}
\centering
\caption{Summary of DMT Advantages}
\begin{tabular}{lccccc}
\toprule
                                    & MLLE  & t-SNE & UMAP  & TAE           & DMT  \\ \hline
Work for toy data                   & Yes   & No    & No    & Yes           & Yes         \\ \hline
Work for real-world data            & No    & Yes   & Yes   & Sometimes     & Yes  
    \\ \hline
Avoid one-to-many mapping           & Yes   & No    & No    & Yes           & Yes 
    \\ \hline
Preserve geometric structure        & No    & No    & No    & Sometimes     & Yes         \\ \hline
Preserve distributional structure   & No    & No    & No    & No            & Yes         \\ \hline
Applicable to unseen data           & No    & No    & No    & Yes           & Yes         \\ \bottomrule
\end{tabular}
% \end{center}
\label{tbl:DMTadvantages}
\end{table*}

\subsection{DMT Performance Analysis}

Here, we analyze and summarise the performance of DMT based on the above experimental results. First, as shown in Figure~\ref{fig:QualitativeComparison}, only DMT worked successfully for all the nine datasets, whereas MLLE fails on real-world data, t-SNE and UMAP fail on toy data, and TAE does not perform as well as DMT on real-world data. Besides, t-SNE and UMAP can suffer from one-to-many mapping due to the direct optimization of embeddings rather than network parameters. Second, as shown in Table~\ref{tbl:RealWorldQuantitative}, only DMT achieves consistently good results across all the datasets in both local and global geometry-based metrics, while TAE sometimes fails. Third, DMT has an overall lower SRM than the other methods, which indicates a better ability to preserve the distributional structure of the data and may help detect outliers not identified by t-SNE and UMAP. Finally, it can be seen from Table~\ref{tbl:RealWorldQuantitative} that DMT has the highest ACC (accuracy) scores in all the five real-world datasets, providing the best NLDR transformation for downstream tasks.
The good characteristics are summarized and compared in Table~\ref{tbl:DMTadvantages}.

\vspace{2em}

\subsection{Ablation Study}

%The hyper-parameter $Q$ can help DMT make an effective balance control between local and global structure preservation. We explore the effect of varying perplexity $Q$ value in the pairwise similarity computation in the latent space. The resulting embeddings from two datasets are shown in Figure~\ref{fig:ablation}. When $Q=5$, a relatively large $\sigma$ is obtained as a result of the binary search, and then the $u_{ij}$ of the input layer obtained under the $\sigma$ value is small, and the weighting factor of the loss generated by each node with distant nodes is small. So the clusters are further away from each other. When $Q$ increases, the obtained $\sigma$ becomes smaller, so more nodes' $u_{ij}$ becomes larger, the weighting factor of the loss caused by each node with distant nodes becomes larger, and the embedding will show more global information.  Closer nodes will tend to merge.

The hyper-parameter $Q$ can help DMT to make a sufficient balance between local and global structure preservation. We explore the effect of varying the perplexity $Q$ on the computation of pairwise similarity in the latent space on two datasets (Coil20 and MNIST). The resulting embedding results are shown in Figure~\ref{fig:ablation}. When $Q=5$, a relatively small $\sigma$ is obtained by binary search, which results in a smaller $u'_{ij}$ for the latent space embeddings will show more local information. As $Q$ increases, the $\sigma$ obtained becomes larger, so $u'_{ij}$ becomes larger, and the embedding will present structural information more globally.

\begin{figure}[!htb]
  \centering
  \includegraphics[width=3.2in]{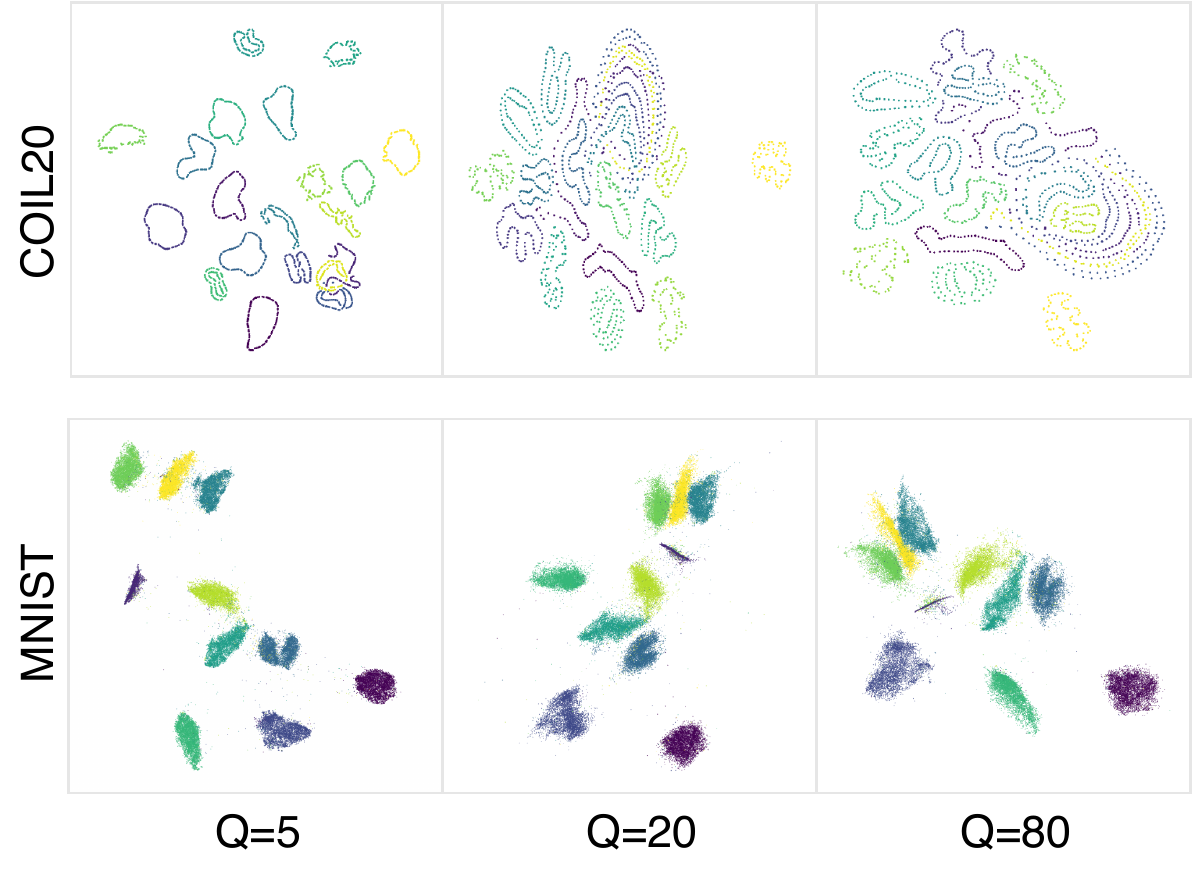}
    \caption{DMT embeddings learned with different perplexity $Q$ values.}
    \label{fig:ablation}
\end{figure}

\iffalse
Effects of continuation in  $\nu$ are demonstrated in Fig.~\ref{fig:main_continuation}. The experiment will compare and analyze the difference between using and not using a continuation strategy. Experiments prove that the effect of the continuation strategy is very obvious. 

\begin{figure}[!htb]
  \centering
  \includegraphics[width=3.5in]{}
    \caption{Results with and without continuation. (a) is the result of using the continuation strategy. (b)(c)(d) are the results of not using the continuation strategy. In (b)(c), the manifold produces overlap and is suspected of falling into a local minimum, and in (d), a distortion of the manifold is produced.}
    \label{fig:main_continuation}
\end{figure}

{\bf Effect of $\nu$ in input and latent space.} One contribution of the proposed DMT method is the use of continuation in $\nu$ in $g(\cdot)$ (cf. Equ.(\ref{equ:t2-distribution})) to find better (more global) solution for optimization w.r.t. $W$. Although we only vary $\nu$ in the latent space in the continuation strategy, the effects of changes in both the input space and the latent space on the embedding are discussed in this section. The results are shown in Fig.~A6 and Fig.~A7. We consider that how the distance of points is mapped to affinity (whether in data space or latent space) is determined by the a priori assumption of nonlinear dimensionality reduction, which we will discuss in more detail in the Appendix.~5.

\fi

\subsection{Complexity and Running Times}

The DMT loss is computed over all point pairs according to Equ.~(\ref{equ:loss-LGP}). 
The time complexity of DMT consists of two main parts: the initialization part and the model training part. For the initialization part, we estimate the $k$-NN for each node, with a computational complexity of $O(M^{1.14})$. In the model training part, we use a mini-batch  training scheme, with a computational complexity of $O(MM')$, where ${M'}$ is the batch size. The total computational complexity is lower than $O(M^2)$, enabling DMT-training on very large datasets.

Table~\ref{tbl:RunningTimes} shows a comparison of running times. As DMT is a neural network-based method, it is much more computationally expensive than t-SNE and UMAP if running on a CPU. However, due to GPU acceleration, the training time of DMT running on a V100 GPU is comparable to the learning time of UMAP running on Intel(R) Xeon(R) Gold 6248R (12 cores).

%The DMT training time is spent mainly in the computation of neighbors. 

\begin{table}[]
%   \footnotesize
  \caption{Running Times (in Seconds)}
  \centering
  \begin{tabular}{@{}cccccc@{}}
  \toprule
              & t-SNE & UMAP  & TAE  & DMT    \\ \midrule
  Coil20      & 22    & 12    & 82   & 25     \\
  Coil100     & 810   & 85    & 3197 & 128    \\
  MNIST       & 1450  & 87    & 1289 & 59     \\
  FMNIST      & 934   & 65    & 1173 & 58     \\
  GoogleNews  & 16906 & 361   & 5392 & 347    \\ \bottomrule
  \end{tabular}
  \label{tbl:RunningTimes}
\end{table}

\vspace{2em}
\section{Conclusion and Future Work} 

The DMT framework imposes the LGP constraints across neural network layers to constrain deep learning solutions in a local metric-preserving subspace of the original solution space. The cross-layer LGP constraints constitute the primary loss for DMT and also serve as neural transformation regularizers. This leads to an LGP-regularized solution to deep manifold learning. Extensive experiments demonstrate 
significant advantages of DMT over the other leading SOTA methods in preserving local- and global-geometric structure and distributional structure of data.

% （3）综合性能我们的最好，同时保证了全局性能和局部性能。可以提我们的保证几何结构的特性，升华一下主题。

%the benefits of LGP constraints for manifolds learning and NLDR neural network transformation learning and

Future work includes optimizing the computation method in DMT learning.  Another direction is to apply DMT to solving semi-supervised learning and graph learning problems.

%The complexity can be reduced by limiting the computation to only nonzero entries. This can be achieved by restricting the sum over all pairs into two much smaller subsets, one of effective nearest neighbors and the other of effective far-away non-neighbors. 

\newtheorem{remark}{Remark}
\newtheorem{theorem}{Theorem}
\renewcommand\thefigure{S.\arabic{figure}}
\renewcommand\thetable{S.\arabic{table}}
\renewcommand\theequation{S.\arabic{equation}}
\setcounter{equation}{0}
\setcounter{table}{0}
\setcounter{figure}{0}

%\clearpage
\section*{Supplements}

The supplements include an introduction to the basic principle of locally isometric smoothness (LIS), definitions of the metrics for comparative evaluation, and the code.

\begin{figure*}[hb]
  \centering
  \includegraphics[width=6.5in]{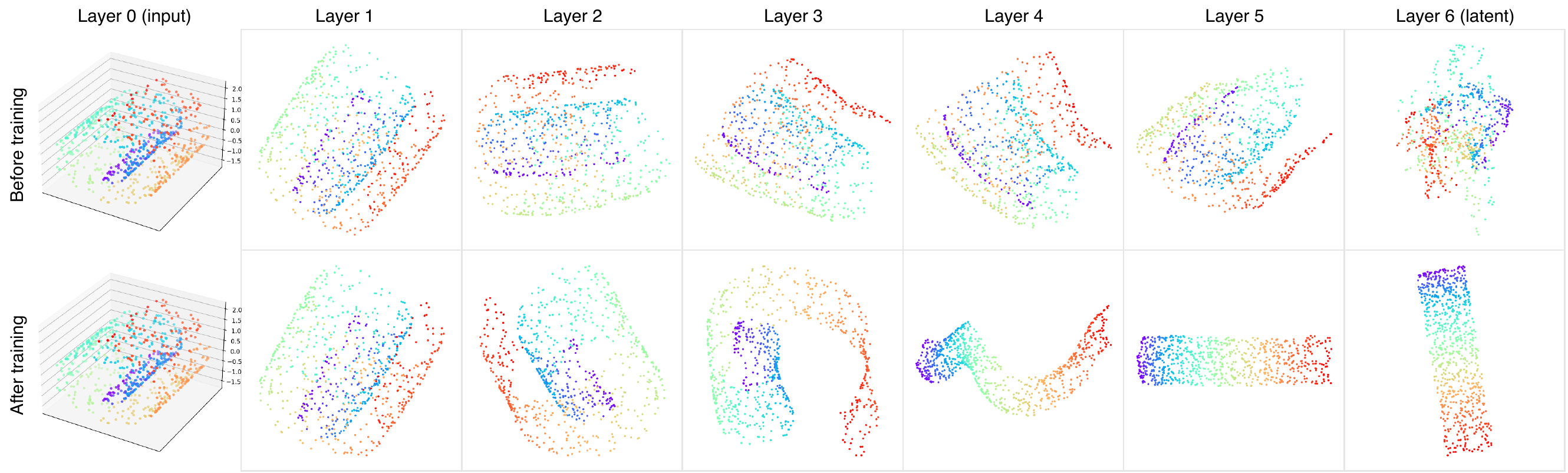}
    \caption{DMT unfolds Swiss-roll in 3-D, layer-by-layer, onto a 2-D planar region, visualized in 3-D.     The DMT network is initialized using the method of \cite{he2015delving}. The upper row shows the outputs of layers 1 through 6 of the initial network. The lower row shows the corresponding outputs when the DMT training is done, where PCA is used to project slight non-planar points onto the 2-D PCA plane. The hyper-parameters are set the same as the Swiss-roll experiments in the main paper.}
    \label{fig:swissroll_layerwise_vis}
\end{figure*}

\subsection*{A.1 Basic Idea of DMT}

This subsection presents two things. The first is an introduction to local bi-Lipschitz continuity as the basic principle used in LIS -- the precursor of DMT.  The second is a visualization example to show how a DMT network unfolds (or unrolls) a set of Swiss-roll data points in the 3-D input space onto a rectangular plane in 2-D latent space in a gradual, layer by layer manner.

{\bf A.1.1 $\mathbf{\cL_{iso}}$ and Optimal Locally Bi-Lipschitz Continuity}
\newtheorem{definition}{Definition}
\begin{definition}
Given two metric spaces $(X, d_X)$ and $(Y, d_Y)$, where $d_X$ denotes the metric on set $X$ and $d_Y$ the metric on $Y$. We say that a function $\Phi : X\rightarrow Z$ is {\em locally bi-Lipschitz continuous} if there exists a real constant $K \ge 1$ such that, for all $x_i$ and $\{x_j|j \in\mathcal{N}_{i}\}$, 
\begin{align*}
{1 \over K} d_X(x_i,x_j) \le d_Z(\Phi(x_i), \Phi(x_j))\le K d_X(x_i,x_j) 
\label{equ:math-bi-Lipschitz}.
\end{align*}
Any such $K$ is referred to as a {\em locally bi-Lipschitz constant} for the function $\Phi$. The smallest constant $K$ is called the (optimal) locally bi-Lipschitz constant.
\end{definition}
In the following, we denote $d_X=d_Y=d$ to simplify the mathematical notations without loss of generality. 

\begin{remark}
To satisfy the inequalities in Definition 1, there has to be $K \ge 1/K$ and $K \ge 1$. Therefore, for any function $\Phi$, the best possible bi-Lipschitz constant, or {\em the lower-bound} is $K=1$, obtained when $d(\Phi(x_i), \Phi(x_j)) = d(x_i,x_j)$. The closer $K$ is to 1, the better the locally bi-Lipschitz continuity is satisfied.
\end{remark}
 
Now let us examine the relationship between a cross-layer transformation $\Phi^{(l,l')}: X^{(l)}\rightarrow X^{(l')}$, its loss function 
(a component of $L_{iso}$ in Equ.~(1)) 
\begin{equation*}
  \begin{aligned}
\cL_{iso}&(\Phi^{(l,l')}) = \cL_{iso}^{(l,l')}(W  \ | \ X^{(l)},X^{(l')}) \\
 = & \sum_{i\in\cS}\sum_{j\in\cN_i} 
  \left|d(x_i^{(l)},x_j^{(l)})-d(x_i^{(l')},x_j^{(l')})\right|\\
 = & \sum_{i\in \cS}\sum_{j\in\cN_i}
  \left|d(x_i^{(l)},x_j^{(l)})-d\left(\Phi^{(l,l')}(x_i^{(l)}), \Phi^{(l,l')}(x_j^{(l)})\right) \right|
  \end{aligned}
\end{equation*}
and its bi-Lipschitz constant $K$. We have the following theorem.

\begin{theorem} 
The bi-Lipschitz constant $K$ for $\Phi^{(l,l')}$ reaches the lowest possible bound of 1 when the loss $\cL_{iso}(\Phi^{(l,l')})$ achieves the lowest possible value of 0.
\end{theorem}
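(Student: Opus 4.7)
The plan is to observe that Theorem~1 essentially unpacks a definition: the loss $\cL_{iso}$ is a sum of nonnegative terms that each measure the failure of an isometric identity on a neighbor pair, while the bi-Lipschitz constant $K=1$ is achievable precisely when that same isometric identity holds on every neighbor pair. So the proof reduces to linking these two characterizations.

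First, I would note that every summand in
\[
\cL_{iso}(\Phi^{(l,l')}) = \sum_{i\in\cS}\sum_{j\in\cN_i} \left|d(x_i^{(l)},x_j^{(l)}) - d\bigl(\Phi^{(l,l')}(x_i^{(l)}),\Phi^{(l,l')}(x_j^{(l)})\bigr)\right|
\]
is nonnegative, so the total vanishes iff each absolute-value term vanishes. Setting all terms to zero yields the local isometry identity $d(x_i^{(l)},x_j^{(l)}) = d(\Phi^{(l,l')}(x_i^{(l)}),\Phi^{(l,l')}(x_j^{(l)}))$ for every $i\in\cS$ and every $j\in\cN_i$, matching exactly the index set over which the locally bi-Lipschitz condition in Definition~1 is quantified.

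Second, I would substitute this identity into the bi-Lipschitz inequalities. With $K=1$, both sides of Definition~1 collapse to $d(x_i,x_j) \le d(\Phi(x_i),\Phi(x_j)) \le d(x_i,x_j)$, which is satisfied with equality. Thus $K=1$ is a valid locally bi-Lipschitz constant for $\Phi^{(l,l')}$. Invoking Remark~1, which states that $K\ge 1$ is necessary for any bi-Lipschitz constant, this shows that $K=1$ is in fact the optimal (smallest) such constant. For completeness I would note the converse: if $K=1$ is the optimal locally bi-Lipschitz constant, then both inequalities force $d(\Phi(x_i),\Phi(x_j)) = d(x_i,x_j)$ on every neighbor pair, so every summand of $\cL_{iso}$ is zero and $\cL_{iso}(\Phi^{(l,l')}) = 0$, giving the ``when'' claim as an equivalence.

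There is no real obstacle here; the theorem is a direct corollary of Definition~1 and Remark~1. The only care needed is matching quantifiers: the isometric identity derived from $\cL_{iso}=0$ holds on exactly the neighborhood system $\cN_i$ used in Definition~1's ``for all $x_i$ and $\{x_j \mid j\in\cN_i\}$.'' As long as the same $\cN$ is used in both the loss and the locally bi-Lipschitz definition, the chain of implications is immediate.
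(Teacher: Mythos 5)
Your proof is correct and follows essentially the same route as the paper's: the loss is a sum of nonnegative terms, so $\cL_{iso}=0$ forces $d(\Phi(x_i),\Phi(x_j))=d(x_i,x_j)$ on every neighbor pair, which is exactly the bi-Lipschitz condition with the optimal constant $K=1$. Your additional explicit appeal to Remark~1 and the converse direction only make the paper's terse argument slightly more complete.
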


\begin{proof}
When $\cL_{iso}(\Phi^{(l,l')})=0$, we have
\begin{equation*}
  \begin{aligned}
\ \ & \cL_{iso}(\Phi^{(l,l')}) = 
  \sum_{i\in\cS}\sum_{j\in\cN_i} 
  \left|d(x_i^{(l)},x_j^{(l)})-d(x_i^{(l')},x_j^{(l')})\right|\\
 & = \sum_{i\in \cS}\sum_{j\in\cN_i}
  \left|d(x_i^{(l)},x_j^{(l)})-d\left(\Phi^{(l,l')}(x_i^{(l)}), \Phi^{(l,l')}(x_j^{(l)})\right) \right| \\
  & = 0
  \end{aligned}
\end{equation*}
This requires that the following should be satisfied for all $i$ and $j \in\mathcal{N}_{i}$
\begin{equation*}
  \begin{aligned}
d\left(\Phi^{(l,l')}(x_i^{(l)}), \Phi^{(l,l')}(x_j^{(l)})\right) = d(x_i^{(l)},x_j^{(l)})
  \end{aligned}
\end{equation*}
This is the case of bi-Lipschitz continuity with $K=1$.

\end{proof}

The {\em auxiliary loss} $L_{push}$ of Equ.~(2) is used to mitigate the local optima problem in learning $\cL_{iso}$-based DMT networks (LIS networks). It ``pushes away" from each other, those pairs of points which are non-neighbors but are close to each other. It helps to avoid $L_{iso}$ from falling into local optima. In incorporating, we adopt a continuation strategy by varying the weight $\mu$ for $L_{push}$; that is, let $\mu$ start from a relatively significant positive value at the beginning of learning and then gradually decrease to 0, so that only $L_{iso}$ takes effect in the final stage.

{\bf A.1.2 DMT ``Unfolds'' Manifolds}

As a method for deep manifold learning, DMT unfolds manifolds in the high-dimensional input space gradually, layer by layer, onto regions in a latent Euclidean space such that Euclidean distance can be used reasonably to approximate the geodesic distance along a manifold in the input. We provide a visualization of the results of the layer-by-layer unfolding of Swiss-roll in Fig.~\ref{fig:swissroll_layerwise_vis}.

%\clearpage
%%%%%%%%%%%%%%%%%%%%%%%%%%%%%%%%%%%%%%%%
\subsection*{A.2 Definitions of Performance Metrics}
%%%%%%%%%%%%%%%%%%%%%%%%%%%%%%%%%%%%%%%%

Six performance metrics are used, composed of two groups, are used for the comparative evaluation. Group 1 consists of five cross-layer metrics: 
\begin{itemize}
    \item[(1)] Continuity \cite{venna2006visualizing}  (the larger, the better),
    \item[(2)] Trustworthiness \cite{venna2006visualizing} (the larger, the better), 
    \item[(3)] Mean relative rank error \cite{lee2009quality} (the smaller, the better), 
    \item[(4)] Distance Pearson Correlation (the larger, the better),
    \item[(5)] Scatter Rank Mismatch, designed by this paper  (the smaller, the better).
\end{itemize} 
These are calculated between layer-pairs $(l,l')$. In the experiments presented in this paper, we used only one pair, that between $l=0$ (input space) and $l'=L$ (latent space). Adding other pairs may help improve but at the cost of more computation.  Group 2 is a global measure of
\begin{itemize}
    \item[(6)] Accuracy (the larger, the better)
\end{itemize} 
for a downstream supervised classification task. The definitions of the six metrics are given as follows.

\textbf{(1) Continuity (CON)} measures how well the k-NN of a point are preserved when going from the latent to the input space:
\begin{equation*}
\begin{aligned}
\text{CON} &=  1- \mathcal{T}_{\text{CON}}
      \sum_{i=1}^{M} \sum_{j \in \mathcal{N}_{i,k}^{(l)},j \not\in \mathcal{N}_{i,k}^{(l')}}(r^{(l')}_{i,j}-k),\\
      \mathcal{T}_{CON}&=\frac{2}{Mk  (2 M-3 k-1)},
\end{aligned}
\end{equation*}
where $\mathcal{T}_{\text{CON}}$ is a normalization term.  $r^{(l')}_{i,j}$ is the rank of $x^{(l')}_j$ in the $k$-NN of $x^{(l')}_i$. $M$ is the size of dataset. $\mathcal{N}_{i,k}^{(l')}$ is the set of indices to the $k$-NN of  $x^{(l')}_i$.  $k$ is set as $k=M/20$ where $M$ is the number of training samples.

\textbf{(2) Trustworthiness (TRU)} is similar to CON but going from the input to the latent space:
\begin{equation*}
  \begin{aligned}
    \text{TRU} &= 1-\mathcal{T}_{\text{TRU}}
    \sum_{i=1}^{M} \sum_{j \in \mathcal{N}_{i,k}^{(l')},j \not\in \mathcal{N}_{i,k}^{(l)}}(r^{(l)}_{i,j}-k),\\
    \mathcal{T}_{\text{TRU}} &=\frac{2}{Mk  (2 M-3 k-1)},
  \end{aligned}
\end{equation*}
where $\mathcal{T}_{\text{TRU}}$ is a normalization term. where $r^{(l)}_{i,j}$ is the rank of $x^{(l)}_j$ in the $k$-NN of $x^{(l)}_i$. $k$ is set as $k=M/20$ where $M$ is the number of training samples.

\textbf{(3) Mean relative rank error (RRE)} (Lee \& Verleysen, 2009) measures the average of changes in neighbor ranking between the two spaces: 
\begin{align*}
\text{RRE} = (\text{MR}^{(l,l')}_k + \text{MR}^{(l',l)}_k)/2,
\end{align*}
where $k_1$ and $k_2$ are the lower and upper bounds of the $k$-NN ($k_1=4$ and $k_2=10$ in this paper), and
\begin{align*}
    \text{MR}^{(l',l)}_k &= \mathcal{T}_{\text{RRE}} \sum_{i=1}^{M} \sum_{j \in 
    \mathcal{N}_{i,k}^{(l)}}\frac{|r^{(l)}_{i,j}-r^{(l')}_{i,j}|}{r^{(l)}_{i,j}},\\
% \end{align*}
% \begin{align*}
     \text{MR}^{(l,l')}_k &= \mathcal{T}_{\text{RRE}} \sum_{i=1}^{M} \sum_{j \in \mathcal{N}_{i,k}^{(l')}}\frac{|r^{(l')}_{i,j}-r^{(l)}_{i,j}|}{r^{(l')}_{i,j})}),
\end{align*}
where $\mathcal{T}_{\text{RRE}}$ is the normalizing term 
\begin{align*}
\mathcal{T}_{\text{RRE}}=1/(M \sum_{k'=1}^{k} \frac{|M-2 k'|}{k'}) 
\end{align*}
in which $k$ is set as $k=M/20$ where $M$ is the number of training samples.

\textbf{(4) Distance Pearson Correlation (DPC)} is the Pearson correlation coefficient between corresponding pairwise distances in the two spaces:
\begin{equation*}
    \text{DPC}=\frac{1}{M-1}  \sum_{i,j\in \mathcal{S}, i\neq j} \left(\frac{D_{ij}^{(l)}-\bar{D^{(l)}}}{\sigma_{D^{(l)}}}\right)\left(\frac{D_{ij}^{(l')}-\bar{D^{(l')}}}{\sigma_{D^{(l')}}}\right)
\end{equation*}
where $D_{ij}^{(l)}$ is the Euclidean distance of $x^{(l)}_i$ and $x^{(l)}_j$. 
$D_{ij}^{(l')}$ is the Euclidean distance of $x^{(l')}_i$ and $x^{(l')}_j$. 
$\bar{D^{(l)}}$ and $\bar{D^{(l')}}$ are the mean value of $D^{(l)}$ and $D^{(l')}$. $\sigma_{D^{(l)}}$ and $\sigma_{D^{(l')}}$ are the variance of $D^{(l)}$ and $D^{(l')}$. 

\textbf{(5) Scatter Rank Mismatch (SRM)} designed by the authors, measures the total absolute difference between class scatter ranks in the two spaces, calculated as follows: First, the scatter of class $c$ is calculated 
$$\xi_c=\frac{1}{\#\cS_c} \sum_{i\in\cS_c}\|x_i-\bar x_c\|,$$
where $\bar x_c$ is the class mean vector; $\|\cdot\|$ is the 2-norm of a vector.
Then the SRM is defined as the normalized Spearman's footrule \cite{diaconis1977spearman}
$$\text{SRM}=\frac{1}{C^2}\sum_{c=1}^C | R^{(l)}_c-R^{(l')}_c|$$
where $R^{(l)}_c$ is the rank of $\xi_c^{(l)}$ in $\{\xi_0^{(l)}, \cdots, \xi_c^{(l)}, \cdots, \xi_C^{(l)}\}$, $|\cdot|$ is the absolute value, $C$ is the number of the classes. 

\textbf{(6) Accuracy (ACC)} of an SVM linear classification working in the learned latent space, calculated as follows:
(a) apply nonlinear dimensionality reduction method to obtain 2-D embedding results;
(b) use 5-fold cross-validation to evaluate the performance of the embedding results under the linear kernel SVM classifier.
(c) calculate the overall accuracy as the average of the five test accuracy numbers.

%\newpage

\subsection*{Code}

The code is included in the file Paper1616\_supp.zip.\\ \\

%\clearpage
\bibliography{MLDL,vision}
\bibliographystyle{icml2021}

% \clearpage

\end{document}